\newtheorem{theorem}{Theorem}
\newcommand{\set}[1]{\left\{#1\right\}}
\newcommand{\abs}[1]{\left|#1\right|}
\newcommand{\norm}[1]{\left\|#1\right\|}
\newcommand{\ceil}[1]{\left\lceil#1\right\rceil}
\newcommand{\ip}[2]{\left\langle#1,#2\right\rangle}
\newcommand{\mat}[2]{\left[\begin{array}{#1} #2 \end{array}\right]}
\newcommand{\Scal}{\mathcal{S}}
\newcommand{\Rbb}{\mathbb{R}}
\newcommand{\Zbb}{\mathbb{Z}}
\begin{document}

\begin{frontmatter}

%\journal{Operations~Research~Letters}

\title{Learning to Emulate an Expert Projective Cone Scheduler}

\author{Neal Master} 
\ead{neal.m.master@berkeley.edu} 
%
%\address{2240 Blake Street~\#110,~Berkeley,~CA~94704.}
\address{{}}

\begin{abstract}

  Projective cone scheduling defines a large class of
  rate-stabilizing policies for queueing models relevant to several
  applications. While there exists considerable theory on the
  properties of projective cone schedulers, there is little practical
  guidance on choosing the parameters that define them. In this paper,
  we propose an algorithm for designing an automated projective cone
  scheduling system based on observations of an expert projective cone
  scheduler. We show that the estimated scheduling policy is able to
  emulate the expert in the sense that the average loss realized by
  the learned policy will converge to zero.  Specifically, for a
  system with $n$ queues observed over a time horizon $T$, the average
  loss for the algorithm is
  $O\left(\ln(T)\sqrt{\ln(n)/T}\right)$. This upper bound holds
  regardless of the statistical characteristics of the system. The
  algorithm uses the multiplicative weights update method and can be
  applied online so that additional observations of the expert
  scheduler can be used to improve an existing estimate of the
  policy. This provides a data-driven method for designing a
  scheduling policy based on observations of a human expert.  We
  demonstrate the efficacy of the algorithm with a simple numerical
  example and discuss several extensions.

\end{abstract}

\end{frontmatter}

\section{Introduction\label{sec:intro}}

In a variety of application areas, processing systems are dynamically
scheduled to maintain stability and to meet various other
objectives. Indeed, the basic problem in scheduling theory has been to
find and study policies that accomplish this task under different
modeling assumptions. In practice however, while human experts may be
able to manage real-world processing systems, it is typically
non-trivial to precisely quantify the costs and objectives that govern
expert schedulers. For example, in operating room scheduling,
\textit{ad~hoc} metrics have been applied in an attempt to model the
cost of delays, e.g.~\cite{Master_Prediction_2017}, but these metrics
are largely subjective. The Delphi Method is commonly used in
management science to quantitatively model expert opinions but such
methods have no algorithmic guarantees and are not always
reliable~\cite{Okoli_2004}.

In this paper, we present an online algorithm that allows us to
emulate an expert scheduler based on observations of the backlog of
the queues in the system and observations of the expert's scheduling
decisions. We use the term ``emulate'' to mean that while the
parametric form of the learned policy may not converge to the
parametric form of the expert policy in all cases, it will always
yield scheduling decisions that on average converge to the expert's
decisions.  This offers a data-driven way for designing autonomous
scheduling systems. We specifically consider a projective cone
scheduling (PCS) model which has applications in manufacturing,
call/service centers, and in communication
networks~\cite{Armony_Cone_2003,Ross_PCS_2009}.

The algorithm in this paper uses the multiplicative weight update
(MWU) method~\cite{Arora_MWU_2012}.  The MWU method has been used in
several areas including solving zero-sum games~\cite{Freund_1999},
solving linear programs~\cite{Plotkin_1995}, and inverse
optimization~\cite{Barmann_ICML_2017}. Because the PCS policy can be
written as a maximization, our techniques are most similar to those
used in~\cite{Barmann_ICML_2017}. In~\cite{Barmann_ICML_2017}, the
authors apply an MWU algorithm over a fixed horizon to learn the
objective of an expert who is solving a sequence of linear
programs. Our results differ from~\cite{Barmann_ICML_2017} in several
ways. One is that because of the queueing dynamics that we consider,
our expert's objective will vary over time whereas
in~\cite{Barmann_ICML_2017} the objective is constant. A related issue
is that in~\cite{Barmann_ICML_2017}, when the expert has a decision
variable of dimension $n$, the dimension of the parameter being
learned is also $n$. In our case, when the expert has a decision
variable of dimension $n$ (i.e. there are $n$ queues in the system),
we need to estimate $\Theta(n^2)$ parameters.  We also note that in
this paper we provide an algorithm that can be applied even when the
horizon is not known \textit{a~priori}.

The goal of inferring parts of an optimization model from data is a
well-studied problem in many other applications. For example, genetic
algorithm heuristics have been applied to estimate the objective and
constraints of a linear program in a data envelopment analysis
context~\cite{Troutt_2008}. The goal of imputing the objective
function of a convex optimization problem has also been considered in
the optimization community,
e.g.~\cite{Keshavarz_2011,Thai_2016}. These papers rely heavily on the
convexity of the objective and the feasible set. This approach does
not apply in a PCS context because the set of feasible scheduling
actions is discrete and hence non-convex.

This paper is also related to inverse reinforcement learning. Inverse
reinforcement learning is the problem of estimating the rewards of a
Markov decision process (MDP) given observations of how the MDP
evolves under an optimal policy~\cite{Ng_IRL_2000}. Inverse
reinforcement learning can be used to emulate expert decision makers
(referred to as ``apprenticeship learning'' in the machine learning
community) as long as the underlying dynamics are
Markovian~\cite{Abbeel_Apprenticeship_2004}. In the PCS model, no such
assumption is made and so our results naturally do not require
Markovian dynamics.

The remainder of this paper is organized as
follows. Section~\ref{sec:dynamics} specifies the PCS model that we
consider. Section~\ref{sec:algos} presents our algorithms and the
relevant guarantees. Because we take a MWU approach to the problem,
our guarantees are bounds on the average loss. However, we also
provide a concentration bound which gives guarantees on the tail of
the loss distribution. We provide a simple numerical demonstration of
our algorithms in Section~\ref{sec:sim}. In
Section~\ref{sec:extensions} we discuss some extensions of our results
and we conclude in Section~\ref{sec:conclusions}.

\section{Projective Cone Scheduling Dynamics\label{sec:dynamics}}

In this section we summarize the PCS model presented in
\cite{Ross_PCS_2009} and comment on the connection to the model
presented in \cite{Armony_Cone_2003}. The PCS model has $n$ queues
each with infinite waiting room following an arbitrary queueing
discipline. Time is discretized into slots $t \in \Zbb_+$\footnote{We
  use the notation $\Zbb_+ = \set{0, 1, 2, \hdots}$.}. The backlog in
queue $i$ at the beginning of time slot $t$ is $x_t(i)$. The backlog
across all queues can be written as a vector $x_t \in \Zbb_+^n$.  The
number of customers that arrive at queue $i$ at the end of time slot
$t$ is $a_t(i)$. The arrivals across all queues can be written as a
vector $a_t \in \Zbb_+^n$. Scheduling configurations are chosen from a
finite set $\Scal \subsetneq \Zbb_+^n$. If configuration $s_t \in
\Scal$ is chosen in time slot $t$ then for each queue $i$,
$\min\set{s_t(i), x_t(i)}$ customers are served at the beginning of
the time slot. We take the departure vector as $d_t = \min\set{s_t,
  x_t} \in \Zbb_+^n$ where the minimum is taken component-wise. This
gives us the following dynamics
\begin{equation}
  x_{t + 1} = x_t - d_t + a_t
  \label{eq:dynamics}
\end{equation}
where $x_0 \in \Zbb_+^n$ is arbitrary. Note that the arrival vector is
allowed to depend on previous scheduling configurations, previous
arrivals, and previous backlogs in an arbitrary way. 

The scheduling configurations are dynamically chosen by solving the
maximization
\begin{equation}
  \max_{s \in \Scal} \ip{s}{B x_t} = \max_{s \in \Scal} \sum_{i, j} s(i)B(i, j) x_t(j)
\end{equation}
where $B \in \Rbb^{n \times n}$ is symmetric and positive-definite
with non-positive off-diagonal elements. We assume that $\Scal$ is
endowed with some arbitrary ordering used for breaking ties. This PCS
policy defines a broad class of scheduling policies and in particular
we note that by taking $B$ as the identity matrix, we return to the
typical maximum weight matching scheduling algorithm.

Although $B$ is a matrix, because $B$ is symmetric, there are only $p
= n(n+1)/2$ rather than $n^2$ free parameters that need to be
learned. Consequently, we will represent the projective cone scheduler
with an upper-triangular array rather than a matrix. In particular,
take $b(i, i) \propto B(i, i)$ for $i \in [n]$ and $b(i, j) \propto
-B(i, j)$ for $(i, j) \in \set{(i, j) \in [n] \times [n] : i <
  j}$\footnote{We use the notation $[k] = \set{1, 2, \hdots, k}$.}. We
can also assume without loss of generality that $\sum_{i, j} b(i, j) =
1$.  Then we can write the projective cone scheduling decision as
follows:
\begin{align}
  s_t &= \arg\max_{s \in \Scal} \ip{s}{Bx_t}\nonumber\\
  &= \arg\max_{s \in \Scal} \sum_{i=1}^n \sum_{j=1}^n B(i,j) s(i) x_t(j)\nonumber\\
  &= \arg\max_{s \in \Scal} \Big\{\sum_i B(i, i) s(i) x_t(i) \nonumber\\
  &\hspace{1.75cm}+ \sum_{i < j} B(i,j) (s(i) x_t(j) + s(j) x_t(i))\Big\}\nonumber\\
  &= \arg\max_{s \in \Scal} \Big\{\sum_i b(i, i) s(i) x_t(i) \nonumber\\
  &\hspace{1.75cm}- \sum_{i < j} b(i,j) (s(i) x_t(j) + s(j) x_t(i))\Big\}\nonumber\\
  &\triangleq \mu(x_t; b)
\end{align}

For convenience, let us define 
\begin{equation}
  \sigma(i, j) = \left\{
    \begin{array}{rc}
      1 &,\, i = j\\
      -1 &,\, i \neq j
    \end{array}
  \right.
\end{equation}
so that we we can write $\mu$ more compactly as follows:
\begin{align}
  \mu(x_t; b) %
  &= \arg\max_{s \in \Scal} \sum_{i \leq j} \sigma(i, j)b(i, j) (s(i) x_t(j) + s(j) x_t(i))
\end{align}

Note that if we define $y_t$, a normalized version of $x_t$, as follows,
\begin{equation}
  y_t = \left\{
    \begin{array}{cl}
      x_t &,\, \norm{x_t}_1 = 0\\
      x_t/\norm{x_t}_1 &,\, \text{otherwise}
    \end{array}
  \right.
\end{equation}
then we have that $s_t = \mu(x_t; b) = \mu(y_t; b)$.

Modeling each customer as having a uniform deterministic service time
is motivated largely by applications in computer systems and in
particular, packet switch scheduling. However, PCS models with
non-deterministic service times have also been considered in the
literature \cite{Armony_Cone_2003}. However, the results in
\cite{Armony_Cone_2003} only apply to the case where $B$ is
diagonal. We have opted to present our algorithms in the context of
the non-diagonal case because we feel that having $\Theta(n^2)$
parameters is more interesting than having only $\Theta(n)$
parameters. Our algorithms can still be applied in the case of
stochastic service times; this is discussed along with other
extensions in Section~\ref{sec:extensions}.

Finally, we note that previous literature on PCS
\cite{Armony_Cone_2003,Ross_PCS_2009} has required a variety of
additional assumptions. For example, in \cite{Ross_PCS_2009} it is
assumed that the arrival process is mean ergodic. We do not require
such an assumption and moreover, while the results in
\cite{Armony_Cone_2003} and \cite{Ross_PCS_2009} are primarily
stability guarantees, we make no assumptions on the stability of the
system.

\section{The Learning Algorithm\label{sec:algos}}

In this section we present our algorithm. We first present a finite
horizon algorithm and then leverage this to present an infinite
horizon algorithm. For both algorithms, we show that the average error
is $O(\ln(T)\sqrt{\ln(p)/T})$. We also provide a bound on the fraction
of observations for which the error exceeds our average case bound.

These algorithms are applied causally in an online fashion. Although
we do not focus on computational issues, we note that computing
$\mu(y_t; b)$ is generally a difficult problem. However, there are
local search heuristics that allow efficient computation of
$\mu(y_{t+1}; b)$ based on the solution to $\mu(y_t; b)$
\cite{Ross_LocalSearch_2004}. Our algorithms require the computation
of $\mu(y_t; \hat b_t)$ where $\hat b_t$ is the estimate of $b$ at
time $t$ and so an online algorithm is appropriate if we want to use
the previous solution as a warm start.

Before presenting the algorithms, we consider the loss function of
interest. Since the expert we are trying to emulate is specified by
the array $b$, it may seem reasonable to want our estimates $\hat b_t$
to converge to $b$. However, this goal is not as reasonable as it may
seem. Because $\Scal$ is discrete, it is possible that two different
values of $b$ can render the same scheduling decisions. Consequently,
the goal of exactly recovering $b$ may be ill-posed. We aim to emulate
the expert scheduler so we want $\hat s_t = \mu(y_t; \hat b_t)$, the
scheduling decision induced by the estimate $\hat b_t$, to be the same
as $s_t$, the expert's scheduling decision. Hence, the loss should
directly penalize discrepancies between $s_t$ and $\hat s_t$.  This
leads us to jointly consider $\hat b_t$ and $\hat s_t$ so that the
loss at time $t$ is
\begin{equation}
  \ell_t %
  = \sum_{i \leq j} \sigma(i, j) (\hat b_t(i, j) - b(i, j))%
  (\delta_t(i)y_t(j) + \delta_t(j) y_t(i))
\end{equation}
where $\delta_t = \hat s_t - s_t$. When $b = \hat b_t$, we have that
$s_t = \hat s_t$ and $\ell_t = 0$. In addition, when $s_t = \hat s_t$
we have that $\ell_t = 0$ even if $b \neq \hat b_t$. The definition of
$\mu$ will allow us to show below that $\ell_t \geq 0$.

Another advantage to this loss function is that it allows us to give
guarantees that are independent of the statistics of the arrival
process. For example, suppose that there are no arrivals at some
subset of the queues. In this case, it would be unreasonable to expect
to be able to estimate the rows and columns of $b$ relevant to those
queues. More generally, the arrival process may not sufficiently
excite all modes of the system. By considering $\hat s_t$ and $\hat
b_t$ simultaneously, we can provide bounds that apply even in the
presence of pathological arrival processes.

\subsection{A Finite Horizon Algorithm}

We first present Algorithm~\ref{algo:finite}, a finite horizon
algorithm that requires knowledge of the
horizon. Algorithm~\ref{algo:finite} is a multiplicate weights update
algorithm and this time horizon is used to set the learning rate.

\begin{algorithm}
  \SetKwInOut{Input}{Input}
  \SetKwInOut{Output}{Output}
  
  \Input{$((y_1, s_1), \hdots, (y_T, s_T))$ \# observations}

  \Output{$(\hat b_1, \hdots, \hat b_T)$ \# parameter estimates}

  \Output{$(\hat s_1, \hdots, \hat s_T)$ \# scheduling estimates}

  $\eta \leftarrow \sqrt{\ln(p)/T}$ 

  $w_1 \leftarrow \text{upper triangular array of 1s}$

  \For{$t = 1, \hdots T$}{

    $\hat b_t \leftarrow w_t / \sum_{i \leq j} w_t(i, j)$

    $\hat s_t \leftarrow \mu(y_t; \hat b_t)$

    $m_t \leftarrow \text{upper triangular array of 0s}$

    $\delta_t = \hat s_t - s_t$

    \If{$\hat s_t \ne s_t$}{

      $z_t = \delta_t / \norm{\delta_t}_\infty$

      \For{$(i, j) \in [n]^2 : i \leq j$}{

        $m_t(i, j) \leftarrow \sigma(i, j)(z_t(i) y_t(j) + z_t(j)y_t(i))$

      }

    }

    $w_{t+1} \leftarrow w_t(1 - \eta m_t)$ \# component-wise

  }

  \caption{\newline\label{algo:finite} Online Parameter Learning with a Fixed Horizon}
\end{algorithm}

\begin{theorem}
  \label{thrm:finite}
  Let 
  \begin{equation*}
    D = \max_{(u, v) \in \Scal^2} \norm{u - v}_\infty
  \end{equation*}
  and $p = n(n+1)/2$.  If $T > 4\ln(p)$ then the output of
  Algorithm~\ref{algo:finite} satifies the following inequality:
  \begin{align}
    0 \leq \frac{1}{T} \sum_{t=1}^T \ell_t \leq 2D\sqrt{\frac{\ln(p)}{T}}
  \end{align}
  % In particular, the following inequalities also holds:
  % \begin{align}
  %   0&\leq %
  %   \frac{1}{T} \sum_{t=1}^T \sum_{i \leq j} \sigma(i, j)y_t(j) %
  %   \hat b_t(i, j) (\hat s_t(i) - s_t(i)) 
  %   \leq 2D\sqrt{\frac{\ln(p)}{T}}\\
  %   %
  %   0&\leq %
  %   \frac{1}{T} \sum_{t=1}^T \sum_{i \leq j} \sigma(i, j)y_t(j) %
  %   b(i, j) (s_t(i) - \hat s_t(i)) %
  %   \leq 2D\sqrt{\frac{\ln(p)}{T}}
  % \end{align}

\end{theorem}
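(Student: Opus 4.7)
The plan is to cast the problem as a multiplicative weights update (MWU) regret analysis, with the "experts" indexed by the upper triangular positions $(i, j)$ with $i \leq j$ and the expert's true parameter array $b$ playing the role of a comparator distribution. The key observation is that $b$ is itself a probability distribution on these $p$ positions: by hypothesis $B$ is symmetric positive definite with non-positive off-diagonals, so $b(i,i) \propto B(i,i) > 0$ and $b(i,j) \propto -B(i,j) \geq 0$ for $i < j$, and by the normalization stated in the PCS setup $\sum_{i \leq j} b(i,j) = 1$.

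First I would establish $\ell_t \geq 0$. Applying the definition of $\mu$, $\hat s_t$ maximizes $\sum_{i\leq j}\sigma(i,j)\hat b_t(i,j)(s(i)y_t(j)+s(j)y_t(i))$ over $s \in \Scal$ so plugging in $s = s_t$ yields one inequality; the symmetric statement for $s_t$ under $b$ yields a second. Adding the two and recognizing $\delta_t = \hat s_t - s_t$ produces $\ell_t \geq 0$ directly. This also shows that $\ell_t = \|\delta_t\|_\infty \cdot \ip{\hat b_t - b}{m_t}$ in the notation of Algorithm~\ref{algo:finite}, since $m_t$ is built from $z_t = \delta_t/\|\delta_t\|_\infty$ and $\hat b_t$, $b$ are the two probability vectors being compared.

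Next I would invoke the standard MWU regret bound. Since $\|y_t\|_1 \leq 1$ and $\|z_t\|_\infty \leq 1$, each entry satisfies $|m_t(i,j)| \leq y_t(i)+y_t(j) \leq 1$, so the losses lie in $[-1,1]$. The condition $T > 4\ln(p)$ ensures $\eta = \sqrt{\ln(p)/T} \leq 1/2$, which is the hypothesis of the classical MWU regret lemma. Applied with comparator $b$, the lemma gives
\begin{equation*}
\sum_{t=1}^T \ip{\hat b_t - b}{m_t} \leq \eta \sum_{t=1}^T \ip{\hat b_t}{m_t^2} + \frac{\ln p}{\eta}.
\end{equation*}
Because $m_t(i,j)^2 \leq 1$ and $\hat b_t$ is a probability distribution, the quadratic term is at most $\eta T$, so the right side is $\eta T + \ln(p)/\eta$.

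Finally I would convert the regret bound into a loss bound. Multiplying through by $\|\delta_t\|_\infty \leq D$ gives $\sum_t \ell_t \leq D(\eta T + \ln(p)/\eta)$, and substituting $\eta = \sqrt{\ln(p)/T}$ yields $\sum_t \ell_t \leq 2D\sqrt{T\ln(p)}$, which upon dividing by $T$ is the desired bound. The main conceptual step — the one piece of real work rather than bookkeeping — is identifying that $b$, as normalized by the paper, is a genuine probability simplex point and that the loss $\ell_t$ factorizes as the product of $\|\delta_t\|_\infty$ and an MWU-style inner product $\ip{\hat b_t - b}{m_t}$; once that structural observation is made, the rest reduces to the textbook MWU bound with the stated learning rate.
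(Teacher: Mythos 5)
Your proof is correct and follows essentially the same route as the paper: the MWU regret bound with the normalized array $b$ as the comparator distribution, the bounds $\abs{m_t(i,j)}\le 1$ and $\eta=\sqrt{\ln(p)/T}<1/2$, the factorization $\ell_t=\norm{\delta_t}_\infty\ip{\hat b_t-b}{m_t}$ with $\norm{\delta_t}_\infty\le D$, and nonnegativity of each summand obtained from the maximizing definition of $\mu$ applied once with $\hat b_t$ and once with $b$. The only cosmetic difference is that you quote the regret lemma in its Hedge-style second-order form $\eta\sum_t\ip{\hat b_t}{m_t^2}$ whereas the paper invokes the linear-update form of \cite[Corollary~2.2]{Arora_MWU_2012}; both collapse to $\eta T+\ln(p)/\eta$ and give the same final bound.
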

\begin{proof}
  Note that because $m_t \in [-1, 1]^p$ and $\eta < \frac{1}{2}$ we
  can directly apply \cite[Corollary~2.2.]{Arora_MWU_2012}:
  \begin{align}
    \sum_{t = 1}^T &\sum_{i \leq j} m_t(i, j) \hat b_t(i, j) \nonumber\\
    &\leq \sum_{t = 1}^T \sum_{i, j}(m_t(i, j) + \eta \abs{m_t(i, j)})b(i, j) %
    + \frac{\ln(p)}{\eta}
  \end{align}
  Since $\abs{m_t(i, j)} \leq 1$ and $\sum_{i \leq j} \hat b_t(i, j) = 1$ we
  have the following:
  \begin{align}
    \sum_{t = 1}^T &\sum_{i \leq j} m_t(i, j) \hat b_t(i, j)  \nonumber\\
    &\leq \sum_{t = 1}^T \sum_{i, j}m_t(i, j)b(i, j) + \eta T + \frac{\ln(p)}{\eta}
  \end{align}
  A straightforward calculation shows that this upper bound is
  minimized when $\eta = \sqrt{\ln(p) / T}$. Rearranging the
  inequality and applying this fact give us the following:
  \begin{align}
    \frac{1}{T}\sum_{t = 1}^T &\sum_{i \leq j} m_t(i, j) \hat b_t(i, j) %
    - \frac{1}{T}\sum_{t = 1}^T \sum_{i \leq j} m_t(i, j) b(i, j)  \nonumber \\
    &\leq 2\sqrt{\frac{\ln(p)}{T}}
  \end{align}
  Now we apply the specifics of $m_t$. By definition of $D$,
  $\norm{\delta_t}_\infty \leq D$. This gives us the following:
  \begin{align}
    &\frac{1}{T}\sum_{t = 1}^T \sum_{i \leq j} \sigma(i, j) %
    (\delta_t(i) y_t(j) + \delta_t(j) y_t(i))%
    \hat b_t(i, j) \nonumber\\
    + &\frac{1}{T}\sum_{t = 1}^T \sum_{i \leq j} \sigma(i, j) %
    (-\delta_t(i) y_t(j) - \delta_t(j) y_t(i))%
    b(i, j)  \nonumber \\
    &\leq 2D\sqrt{\frac{\ln(p)}{T}}
    \label{ineq:cesaro}
  \end{align}
  Note that $\hat s_t = \mu(y_t; \hat b_t)$ and $\mu$ is defined in terms
  of a maximization. Therefore, 
  \begin{align*}
    \sum_{i \leq j} &\sigma(i, j) %
    (\hat s_t(i) y_t(j) + \hat s_t(j) y_t(i)) \hat b_t(i, j) \\
    &\geq \sum_{i \leq j} \sigma(i, j) %
    (s(i) y_t(j) + s(j) y_t(i)) \hat b_t(i, j)
  \end{align*}
  for any $s \in \Scal$. This shows that each term in the first
  Ces\`aro sum in~(\ref{ineq:cesaro}) is non-negative. Similarly, each
  term in the second Ces\`aro sum in~(\ref{ineq:cesaro}) is
  non-negative. This gives us a lower bound of zero. Rearranging the
  terms leaves us with the desired results.
\end{proof}

\subsection{An Infinite Horizon Algorithm}

We now present Algorithm~\ref{algo:infinite}, an infinite horizon
algorithm that dynamically changes the learning
rate. Algorithm~\ref{algo:infinite} applies the ``doubling trick'' to
Algorithm~\ref{algo:finite}. The idea is that we define epochs $[T_k,
T_{k+1}]$ where $T_k~=~2^k(4\ln(p))$ for $k \geq 0$ with $T_{-1} =
0$. The duration of the $k^{th}$ epoch is $T_k$ and in this epoch we
apply Algorithm~\ref{algo:finite}. Up to poly-logarithmic factors of
$T$, this gives us the same convergence rate that we had for
Algorithm~\ref{algo:finite}.

\begin{algorithm}
  \SetKwInOut{Input}{Input}
  \SetKwInOut{Output}{Output}
  
  \Input{$((y_1, s_1), (y_2, s_2), \hdots)$ \# observations}

  \Output{$(\hat b_1, \hat b_2, \hdots)$ \# parameter estimates}

  \Output{$(\hat s_1, \hat s_2, \hdots)$ \# scheduling estimates}

  $T_{-1} \triangleq 0$

  $T_k \triangleq 2^k (4\ln(p))$ for $k \in \set{0, 1, 2, \hdots}$.

  \For{$t = 1, 2, \hdots $}{

    \If{ $T_k < t \leq T_{k +1}$} {

      Apply Algorithm~\ref{algo:finite} 

      with $T\equiv T_k$ and without re-initializing $w_t$

    }

  }

  \caption{\newline\label{algo:infinite} Online Parameter Learning with an Unknown Horizon}
\end{algorithm}

\begin{theorem}
  \label{thrm:infinite}
  Suppose $T \geq T_0$. Define $\lg(\cdot)$ as
  $\ceil{\log_2(\cdot)}$. Then the output of
  Algorithm~\ref{algo:infinite} satifies the following inequality:
  \begin{align}
    0\leq %
    \frac{1}{T}\sum_{t=1}^T \ell_t %
    \leq 2\sqrt{2}D \lg\left(\frac{2T}{T_0}\right) \sqrt{\frac{\ln(p)}{T}}%
    \label{ineq:infinite}
  \end{align}
  % In particular, the following inequalities also holds:
  % \begin{align}
  %   0 \leq \frac{1}{T} &\sum_{t=1}^T \sum_{i \leq j} \sigma(i, j)y_t(j) %
  %   \hat b_t(i, j) (\hat s_t(i) - s_t(i)) \\
  %   &\leq 2\sqrt{2}D \lg\left(\frac{2T}{T_0}\right) \sqrt{\frac{\ln(p)}{T}}\nonumber\\
  %   %
  %   0 \leq \frac{1}{T} &\sum_{t=1}^T \sum_{i \leq j} \sigma(i, j)y_t(j) %
  %   b(i, j) (s_t(i) - \hat s_t(i)) \\
  %   &\leq 2\sqrt{2}D \lg\left(\frac{2T}{T_0}\right) \sqrt{\frac{\ln(p)}{T}}\nonumber
  % \end{align}

  Note that these are the same bounds as in Theorem~\ref{thrm:finite}
  but with an additional factor of $\sqrt{2}\lg(2T / T_0)$. 
\end{theorem}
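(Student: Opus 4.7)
The plan is to prove this by the standard doubling trick: decompose $[1,T]$ into epochs of geometrically growing length $T_k = 2^k T_0$, invoke Theorem~\ref{thrm:finite} on each epoch, and sum the resulting per-epoch bounds.

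First I would fix the epoch decomposition. Because $T_k = 2^k T_0$ and $T \geq T_0$, let $K$ be the largest index such that epoch $k$ has nontrivial overlap with $[1,T]$; the geometric growth of the epoch lengths forces $K+1 \leq \lg(2T/T_0)$. Second, on each epoch Algorithm~\ref{algo:infinite} invokes Algorithm~\ref{algo:finite} with horizon $T_k \geq T_0 = 4\ln(p)$, so Theorem~\ref{thrm:finite} immediately gives the per-epoch total loss bound $\sum_{t \in \text{epoch }k}\ell_t \leq T_k \cdot 2D\sqrt{\ln(p)/T_k} = 2D\sqrt{T_k\ln(p)}$. Because the proof of Theorem~\ref{thrm:finite} actually shows $\ell_t \geq 0$ for each individual $t$ — each $\ell_t$ decomposes into the sum of two non-negative $\arg\max$ suboptimality gaps, one under $\hat b_t$ and one under $b$ — the same bound is also a valid upper bound on the cumulative loss in the possibly partial final epoch.

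Third, summing the per-epoch bounds, using the crude estimate $\sqrt{T_k} \leq \sqrt{T_K} \leq \sqrt{2T}$ for every $k \leq K$ together with the epoch count $K+1 \leq \lg(2T/T_0)$, gives $\sum_{t=1}^T \ell_t \leq 2D\sqrt{\ln(p)}\cdot(K+1)\cdot\sqrt{2T} \leq 2\sqrt{2}\,D\lg(2T/T_0)\sqrt{T\ln(p)}$. Dividing by $T$ yields the claimed upper bound, and the lower bound $0\leq\tfrac{1}{T}\sum_t\ell_t$ is immediate from the per-term non-negativity of $\ell_t$.

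The delicate point — the one step I expect to be the main obstacle — is that Algorithm~\ref{algo:infinite} explicitly does not re-initialize the weight array $w_t$ when it enters a new epoch, whereas the proof of Theorem~\ref{thrm:finite} applies Corollary~2.2 of \cite{Arora_MWU_2012} with a uniform initial weight array. I would need to verify that the warm-started MWU run inside each epoch still satisfies the same per-epoch regret bound, either by re-deriving the MWU guarantee from a general starting distribution (whereby $\ln(p)$ is replaced by a KL divergence that can be bounded by $\ln(p)$ using the fact that the multiplicative updates with $\eta < 1/2$ keep every entry of $w_t$ bounded away from zero) or by arguing that the proof of Theorem~\ref{thrm:finite} is insensitive to the initialization up to constants which are harmlessly absorbed into the stated $2\sqrt{2}$ prefactor and $\lg(2T/T_0)$ factor. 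This warm-start bookkeeping is the only step that is not a direct plug-and-play invocation of Theorem~\ref{thrm:finite}.
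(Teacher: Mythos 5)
Your proposal is correct and follows essentially the same route as the paper: the doubling trick, a per-epoch invocation of Theorem~\ref{thrm:finite}, non-negativity of $\ell_t$ to handle the truncated final epoch, and the crude bounds $\sqrt{T_k}\leq\sqrt{T_K}\leq\sqrt{2T}$ with $K+1\leq\lg(2T/T_0)$ epochs. The warm-start issue you flag is resolved in the paper exactly as in your second alternative --- by observing that the proof of \cite[Corollary~2.2]{Arora_MWU_2012} does not actually require uniform initial weights --- rather than via the KL-divergence argument.
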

\begin{proof}
  First note that the proof of \cite[Corollary~2.2.]{Arora_MWU_2012}
  does not require the initial weights to be uniform so
  Theorem~\ref{thrm:finite} still applies even without the
  initialization on line 2 of Algorithm~\ref{algo:finite}.  For
  convenience, let $U_k = 2D\sqrt{\ln(p) / T_k}$ and take $K =
  \lg(T/T_0)$.  Applying Theorem~\ref{thrm:finite} to each stage of
  Algorithm~\ref{algo:infinite} gives us the following:
  \begin{align}
    0 &\leq \sum_{t=1}^T \ell_t \nonumber\\
    &\leq \sum_{k=0}^K \sum_{t=1}^{T_k} \ell_t %
    \leq \sum_{k=0}^K T_k U_k %
    \leq \sum_{k=0}^K 2D\sqrt{\ln(p)}\sqrt{T_k}\nonumber\\
    &\leq 2D(K + 1)\sqrt{\ln(p)}\sqrt{T_K} \nonumber\\
    &\leq 2\sqrt{2}D(K + 1)\sqrt{\ln(p)}\sqrt{T} 
  \end{align}
  The first inequality follows from the fact that $\ell_t \geq 0$; the
  second inequality follows by extending the sum from $T$ up to $T_K$;
  the third and fourth inequalities follow from
  Theorem~\ref{thrm:finite}. The penultimate inequality follows from
  the fact that $\set{T_k}$ is an increasing sequence and the final
  inequality follows because $T_K$ can be no more that $2T$.

  Dividing by $T$ gives the desired result.
\end{proof}

\subsection{A Concentration Bound}

Our previous results provided bounds on the average loss of our
algorithms. In this section, we provide bounds for the tail of the
distribution of the loss. This gives us the guarantee that the
fraction of observations for which the loss exceeds our average case
bound tends to zero. 

\begin{theorem}
  \label{thrm:concentration}
  Let
  \begin{equation}
    f_T(\epsilon) = \frac{\abs{\set{ t \leq T : \ell_t > 2\sqrt{2}D\lg\left(\frac{2T}{T_0}\right)\sqrt{\frac{\ln(p)}{T}} + \epsilon}} }{T}
  \end{equation}
  be the fraction of observations up to time $T \geq T_0$ for which the loss
  exceeds the average-case bound by at least $\epsilon$. Then for any
  $\epsilon > 0$ we have that
  \begin{equation}
    f_T(\epsilon) \leq 1 - \frac{\epsilon}{2\sqrt{2}D\lg\left(\frac{2T}{T_0}\right)\sqrt{\frac{\ln(p)}{T}} + \epsilon}
  \end{equation}
  and hence,
  \begin{equation}
    \lim_{T \rightarrow \infty} f_T(\epsilon) = 0.
  \end{equation}
\end{theorem}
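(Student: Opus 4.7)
The plan is to prove this via a Markov-style argument using the non-negativity of $\ell_t$ together with the upper bound from Theorem~\ref{thrm:infinite}. The key observation is that $f_T(\epsilon)$ counts how often $\ell_t$ strictly exceeds a prescribed threshold, so a lower bound on $\sum_t \ell_t$ in terms of this count must be compatible with the average-case upper bound already in hand.

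First I would introduce the shorthand $B_T = 2\sqrt{2}D\lg(2T/T_0)\sqrt{\ln(p)/T}$ for the average-case bound, and let $N = \abs{\set{t \leq T : \ell_t > B_T + \epsilon}}$, so that $f_T(\epsilon) = N/T$. Since $\ell_t \geq 0$ always, I can write
\begin{equation*}
\sum_{t=1}^T \ell_t \;\geq\; \sum_{t:\, \ell_t > B_T + \epsilon} \ell_t \;>\; N (B_T + \epsilon).
\end{equation*}
On the other hand, Theorem~\ref{thrm:infinite} gives $\sum_{t=1}^T \ell_t \leq T\, B_T$. Chaining the two inequalities yields $N(B_T + \epsilon) < T\, B_T$, and dividing by $T(B_T + \epsilon)$ gives
\begin{equation*}
f_T(\epsilon) \;<\; \frac{B_T}{B_T + \epsilon} \;=\; 1 - \frac{\epsilon}{B_T + \epsilon},
\end{equation*}
which is precisely the claimed bound after substituting back the explicit form of $B_T$.

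For the limit, I would simply observe that $B_T \to 0$ as $T \to \infty$, since $\lg(2T/T_0)$ grows only poly-logarithmically in $T$ while $\sqrt{\ln(p)/T}$ decays like $T^{-1/2}$. Consequently $\epsilon/(B_T + \epsilon) \to 1$ for any fixed $\epsilon > 0$, and the upper bound on $f_T(\epsilon)$ tends to zero, yielding $\lim_{T \to \infty} f_T(\epsilon) = 0$.

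There is no real obstacle here; the whole argument is essentially a one-line Markov inequality dressed up with the explicit constants from the earlier theorem. The only thing to be careful about is the strict versus non-strict inequality in the definition of $f_T$: because the event in the numerator is $\ell_t > B_T + \epsilon$, I get a strict inequality $N(B_T+\epsilon) < T\,B_T$, which is what allows the conclusion $f_T(\epsilon) \leq 1 - \epsilon/(B_T+\epsilon)$ (with the stated non-strict inequality) to hold.
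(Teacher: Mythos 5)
Your argument is correct and is essentially the paper's own proof: the paper applies Markov's inequality to the point measure placing mass $1/T$ at each $\ell_t$ and then invokes the bound from Theorem~\ref{thrm:infinite}, which is exactly the computation you carry out explicitly. No differences worth noting.
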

\begin{proof}
  The observed loss sequence $\{\ell_t\}_{t = 1}^T$ defines a point
  measure on $\Rbb_+$ where each point has mass $1/T$. Applying
  Markov's Inequality to this measure gives us that
  \begin{align*}
    f_T(\epsilon) %
    &\leq \frac{1}{2\sqrt{2}D\lg\left(\frac{2T}{T_0}\right)\sqrt{\frac{\ln(p)}{T}} + \epsilon} \cdot %
    \frac{1}{T}\sum_{t=1}^T \ell_t\\
    &\leq \frac{1}{2\sqrt{2}D\lg\left(\frac{2T}{T_0}\right)\sqrt{\frac{\ln(p)}{T}} + \epsilon} \cdot %
    2\sqrt{2}D\lg\left(\frac{2T}{T_0}\right)\sqrt{\frac{\ln(p)}{T}}
  \end{align*}
  Rearranging the upper bound gives the first result. For the second
  result we simply take the limit and note that
  \begin{equation*}
    \lim_{T \rightarrow \infty} 2\sqrt{2}D\lg\left(\frac{2T}{T_0}\right)\sqrt{\frac{\ln(p)}{T}} %
    = 0.
  \end{equation*}
\end{proof}

\section{A Numerical Demonstration\label{sec:sim}}

We now demonstrate Algorithm~\ref{algo:infinite} on a small example of
$n = 2$ queues. In each time slot, the number of arriving customers is
geometrically distributed on $\Zbb_+$. For queue 1 the mean number of
arriving customers is 1 and for queue 2 the mean number of arriving
customers is 2. The arrivals are independent across time slots as well
as across queues. We take
\begin{equation*}
  b = \mat{cc}{0.5  & 0.3\\ {} & 0.2}
\end{equation*}
and
\begin{equation*}
  \Scal = \set{\mat{cc}{0 & 0}', \mat{cc}{1 & 0}', \mat{cc}{2 & 1}', \mat{cc}{0 & 2}'}.
\end{equation*}
This choice of $b$ shows that the expert scheduler prioritizes queue 1
over queue 2 and the expert also has a preference to not serve both
queues simultaneously. We simulate the system and run
Algorithm~\ref{algo:infinite} for $T = 10^6$ time slots with
$x_0~=~\mat{cc}{0&0}'$. The results are shown in Figure~\ref{fig:sim}.

\begin{figure}
  \begin{subfigure}{\columnwidth}
    \includegraphics[width=\columnwidth]{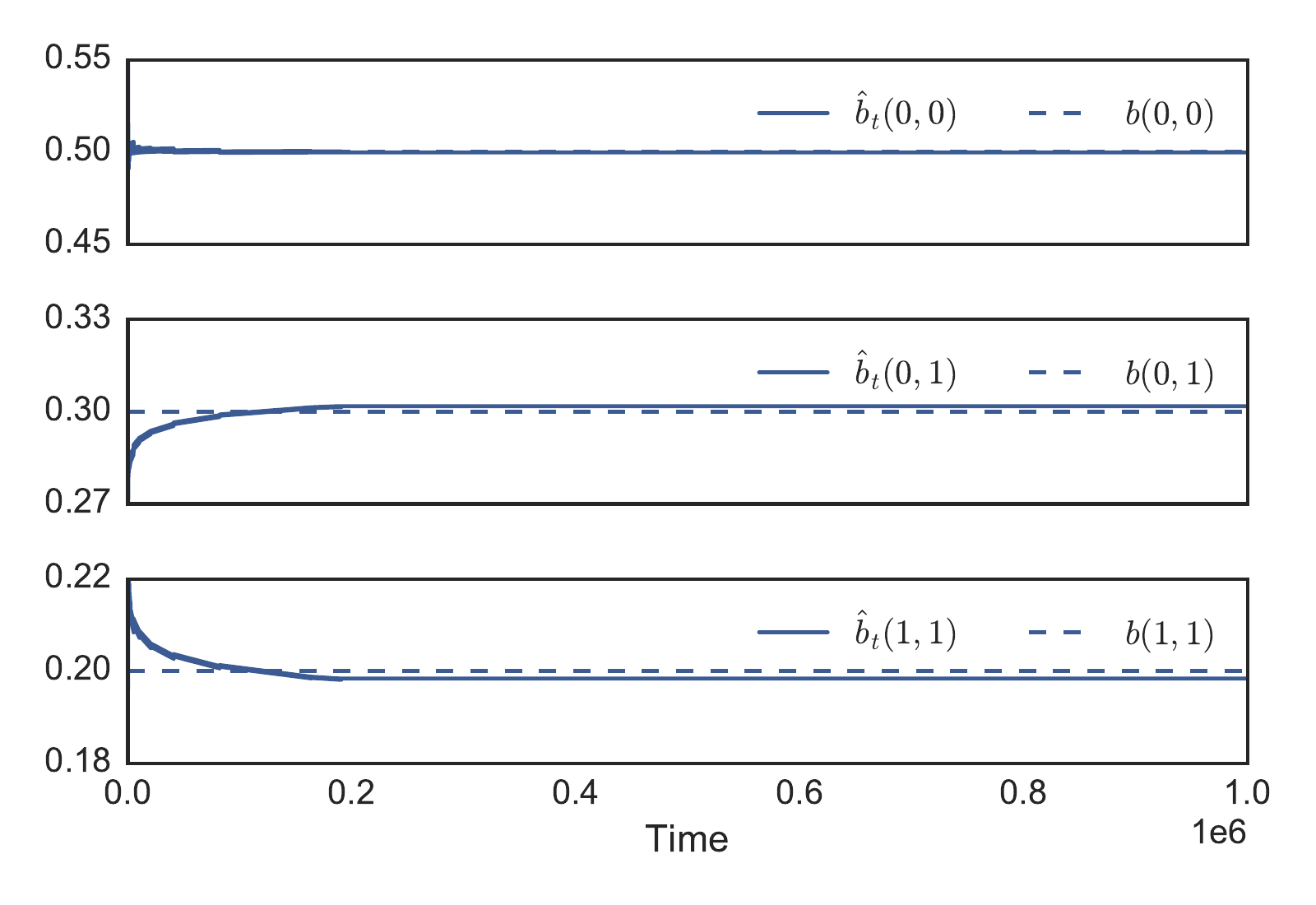}
    \caption{Evolution of $\hat b_t$\label{fig:convergence}}
  \end{subfigure}

  \begin{subfigure}{\columnwidth}
    \includegraphics[width=\columnwidth]{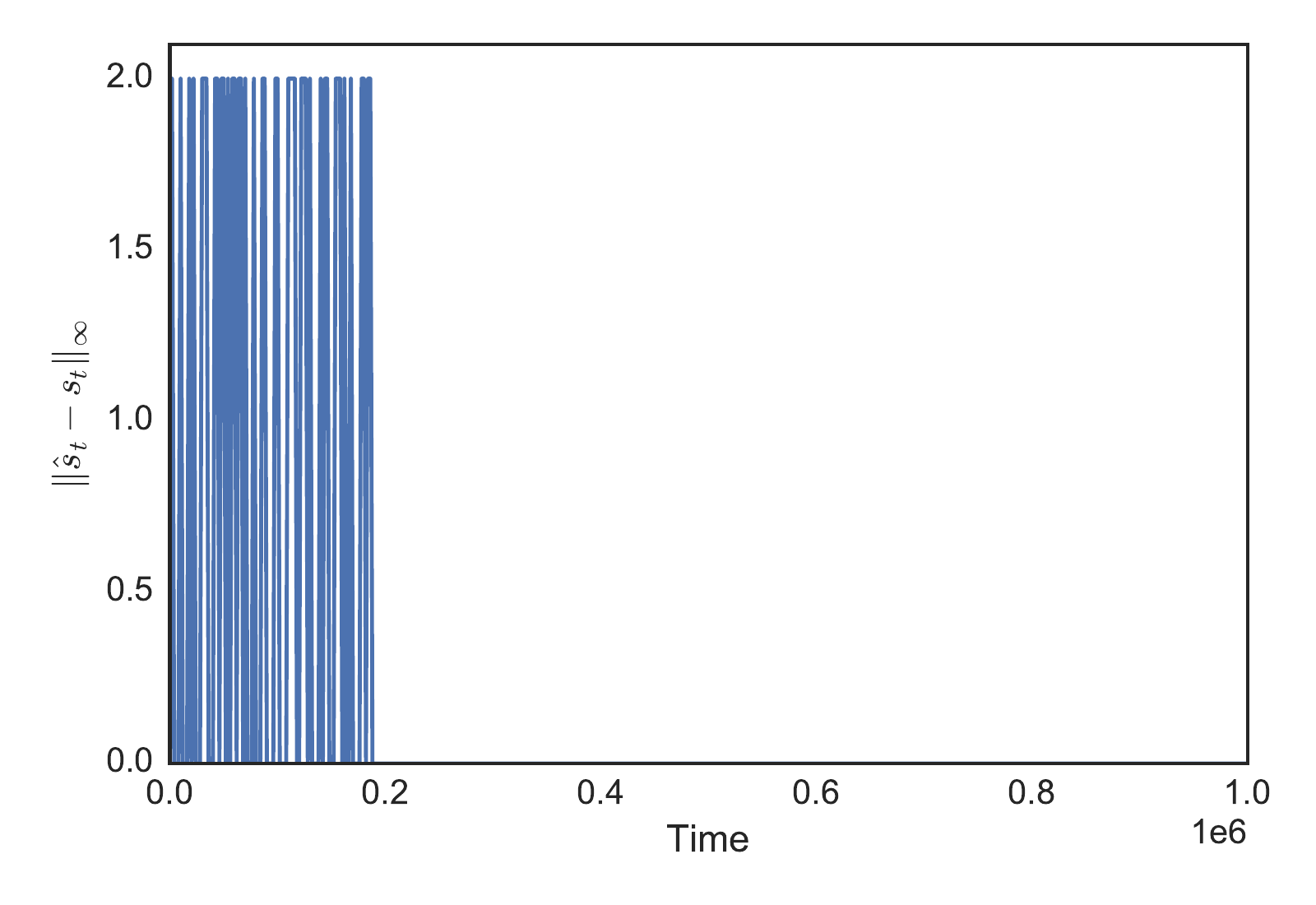}
    \caption{Error in learned scheduling decisions\label{fig:error}}
  \end{subfigure}

  \begin{subfigure}{\columnwidth}
    \includegraphics[width=\columnwidth]{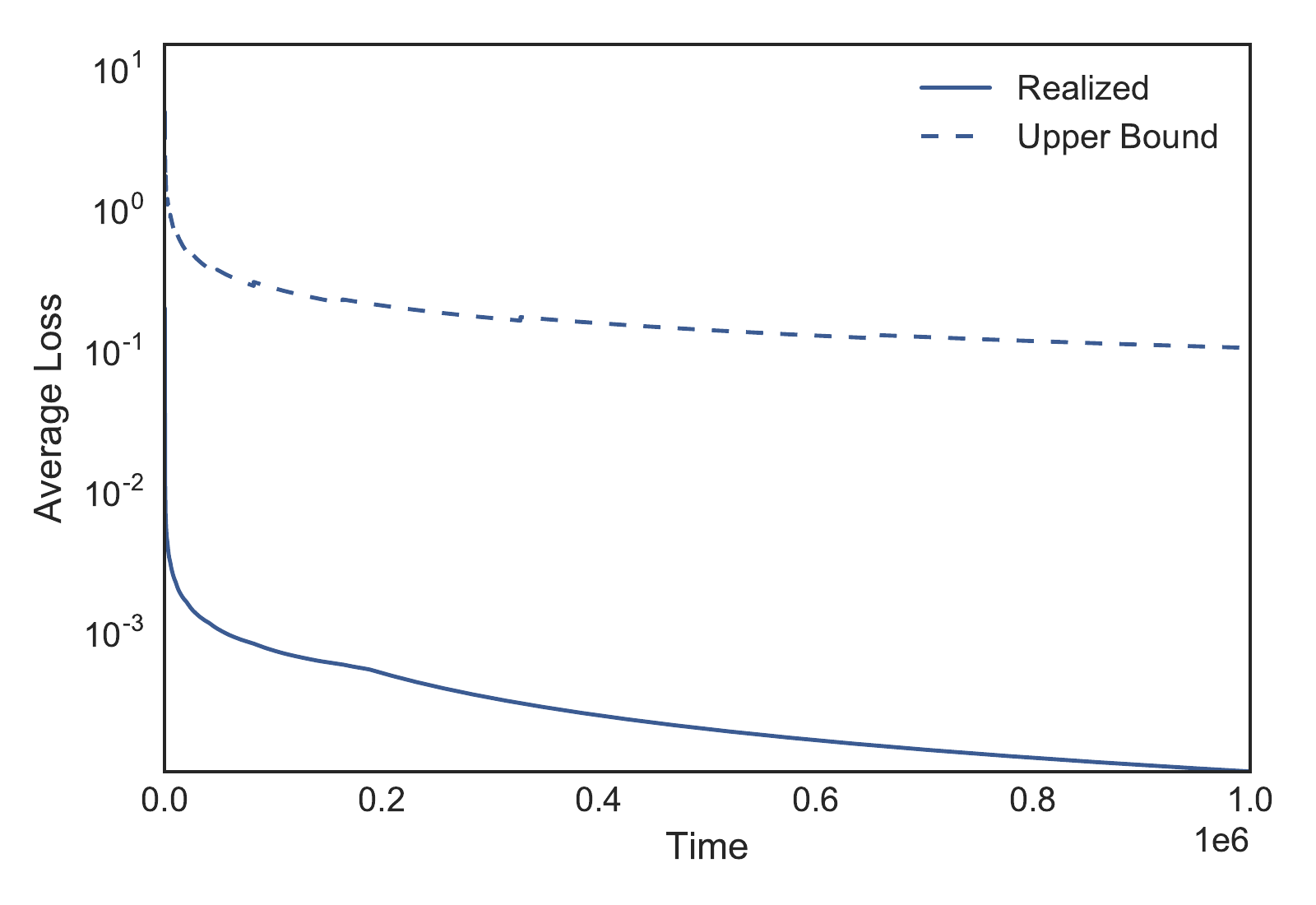}
    \caption{Average realized loss\label{fig:loss}}
  \end{subfigure}

  \caption{Output of Algorithm~\ref{algo:infinite} for the example in
    Section~\ref{sec:sim}\label{fig:sim}}
\end{figure}

First note that Figure~\ref{fig:convergence} shows that the $\hat b_t$
does not converge to $b$. We see that (to 4 decimal places)
\begin{equation*}
  \hat b_T = \mat{cc}{0.4998  & 0.3018 \\ {} & 0.1984}
\end{equation*}
and for the majority of the simulation these parameter estimates do
not change. The reason is that (as shown in in Figure~\ref{fig:error})
$\hat b_T$ yields the same scheduling decisions as $b$. The algorithm
learns to emulate the expert scheduler so the loss becomes zero and
the weights stop updating. This possibility was discussed at the
beginning of Section~\ref{sec:algos}.

Figure~\ref{fig:loss} which shows that while the average loss does
indeed tend to zero, the upper bound proved in
Theorem~\ref{thrm:infinite} is quite loose in this situation. This is
expected due to the generality of the theorem. This also means that
the concentration bound in Theorem~\ref{thrm:concentration} is quite
conservative. Indeed, for this simulation we see that $f_t(\epsilon) =
0$ for all $t$ and for any $\epsilon > 0$. In other words, no observed
loss ever exceeds the average case bound.

\section{Extensions\label{sec:extensions}}

We now discuss some extensions to our algorithms. We first note that
we could replace line 14 in Algorithm~\ref{algo:finite} with
\begin{equation*}
  w_{t+1} \leftarrow w_t \cdot \exp(-\eta w_t).
\end{equation*}
The new algorithm would be a Hedge-style algorithm and we would be
able to use apply other results
(e.g. \cite[Theorem~2.4]{Arora_MWU_2012}) to obtain similar upper
bounds on the average loss.

We also note that we could modify our algorithms and obtain tighter
upper bounds if we impose additional assumptions on the expert. For
example, the expert may have a fairly simple objective that leads to
prioritization of some queues over others. In this case, we would have
$b(i, j) = 0$ for $i \ne j$. Rather than having a triangular array of
$p$ parameter estimates, we could instead keep track of just $n$
estimates.  Since $\Theta(\ln(p)) = \Theta(\ln(n))$, the convergence
rate would not change but we would have smaller constant
factors. Other sparsity patterns could be handled in a similar
fashion. The diagonal case is slightly simpler because there would be
no need use $\sigma(i, j)$ to keep track of the appropriate signs.

As noted in Section~\ref{sec:dynamics}, a continuous-time PCS model
with heterogenous and stochastic service times was considered in
\cite{Armony_Cone_2003}. Our algorithms could be applied in this
setting as well by updating the algorithm immediately after customer
arrivals and departures rather than in discrete time slots. In
\cite{Armony_Cone_2003}, $B$ is diagonal and so we could apply the
simplifications mentioned above. Our theorems would still hold because
they not require that the state update happen at regularly spaced
intervals -- the algorithms merely require a stream of observed
backlogs and observed scheduling actions. 

\section{Conclusions and Future Work\label{sec:conclusions}}
In this paper we have proposed an algorithm that learns a scheduling
policy that emulates the behavior of an expert projective cone
scheduler. This offers a data-driven way of designing automated
scheduling policies that achieve the same goals as a human manager. We
have provided several theoretical guarantees and have numerically
demonstrated the efficacy of the algorithm on a simple example.

This paper opens the door for a few area of future work. One idea is
to provide tighter bounds that depend on the statistical properties of
the arrival process. A benefit of the current approach is that it does
not require any assumptions on the arrival process but the clear
downside is that the resulting bounds are quite loose. An algorithm
that uses information about the arrival process could have faster
convergence rates and tighter bounds. 

Another idea is to investigate the impact of an approximate
computation of $\mu$. As mentioned in Section~\ref{sec:algos}, in
large-scale problems, exactly computing $\mu(y; b)$ is generally a
difficult problem and heuristic approaches are typically taken in
practice. An area of future work would be to consider how such
approximation ``noise'' affects our ability to emulate the expert
scheduler.

\section*{References}
\bibliographystyle{elsarticle-num}
\bibliography{NMaster_LearningPCS}

\end{document}